\newtheorem{definition}{Definition}
\newtheorem{proposition}{Proposition}
\newtheorem{proof}{Proof}
\title{Human Action Recognition with Multi-Laplacian Graph Convolutional Networks}
\author{Ahmed Mazari \ \ \ \ \ \ \ \ \ \ \ \ \ \ Hichem Sahbi \\ $ $ \\ {Sorbonne University, UPMC, CNRS, LIP6, F-75005 Paris, France}}
\begin{document}

\maketitle

\begin{abstract}

Convolutional neural networks are nowadays witnessing a major success in different pattern recognition problems. These learning models were basically designed to handle vectorial data such as images but their extension to non-vectorial and semi-structured data (namely graphs with variable sizes, topology, etc.) remains a major challenge, though a few interesting solutions are currently emerging. \\ 
In this paper, we introduce MLGCN; a novel spectral Multi-Laplacian Graph Convolutional Network. The main contribution of this method resides in a new design principle that learns graph-laplacians as convex combinations of other elementary laplacians -- each one dedicated to a particular topology of the input graphs. We also introduce a novel pooling operator, on graphs, that proceeds in two steps: context-dependent node expansion is achieved, followed by a global average pooling; the strength of this two-step process resides in its ability to preserve the discrimination power of nodes while achieving permutation invariance. Experiments conducted on SBU and UCF-101 datasets, show the validity of our method for the challenging task of action recognition. \\

{\bf Keywords:}  deep representation learning, graph convolutional networks, human action recognition 
\end{abstract}
\def\S{{\cal S}} 
\def\V{{\cal V}}
\def\G{{\cal G}}
\def\E{{\cal E}}
\def\L{{\bf L}}
\def\I{{\bf I}}
\def\D{{\bf D}}
\def\A{{\bf A}}
\def\U{{\bf U}}
\def\Lambdaa{{\Lambda}}
\def\T{{\bf T}}
\def\N{{\cal N}}

\def\thetaa{{\bf \theta}}
\def\betaa{{\bf \hat{w}}}

\section{Introduction}
\label{sec:intro}
Video action recognition is a major task in computer vision which consists in classifying sequences of frames into categories (or classes) of actions. This task is known to be challenging due to the intrinsic properties (appearance and motion) of moving objects and also their extrinsic acquisition conditions (occlusions, background clutter, camera motion, illumination, length/resolution, etc.). Most of the existing  action recognition methods are based on machine learning~\cite{lingsahbi2013,temporalpyramid,lingsahbieccv2014,mkl_action,temporal_pyramid,superived_dic_action,multi_svm,lingsahbiicip2014}; their general recipe consists  in extracting (handcrafted or learned) features  \cite{sahbijstars17,sahbispie2004,sahbiicip09,sahbiigarss16,sahbiigarss11} prior to classifying them  using inference techniques \cite{sahbiicassp15,sahbiicassp16a} such as kernel methods and deep networks \cite{sahbiaccv2010,twostream14,kin3d,sahbipr2012,pose,sahbikpca06,spresnet16,mklimage2017,spresnetmulti17,sahbiicassp16b,sahbiicassp16b,sahbiclef13,tristan2017,TSN,sahbiclef08,sahbiphd, sahbiiccv17,sahbiicip18}. \\ 
\indent Among the machine  learning techniques -- for action recognition -- those based on deep networks are particularly performant;  successful methods include two-stream 2D convolutional neural networks (CNNs) \cite{twostream14}, two-stream 3D CNNs and simple 3D CNNs \cite{kin3d}. However, and beside being data-hungry, these models rely on a strong assumption that videos are described as vectorial data; in other words, these methods assume that videos come only in the form of regular (2D or 3D) grids. This  assumption may not  hold in practice: on the one hand,  one may consider moving objects as constellations  of interacting body parts (such as 2D/3D skeletons or joints in human actions) and this requires processing only these joints without taking into account {\it holistically}  cluttered background or other parts in the scenes. On the other hand, moving objects may be occluded with spurious details which are not necessarily related to the moving object parts. Hence, for these particular settings, graph convolutional networks (GCNs) \cite{Node_sampling} are rather more appropriate where nodes, in these models, capture object parts and links their spatio-temporal interactions. \\
\noindent Early GCNs are targeted to graphs with  known/fixed topology\footnote{as 2D regular grids (see also \cite{sahbispie05,sahbicassp11,sahbiicpr18}).} (fixed number of nodes/edges, constant degree, etc.)~\cite{Bresson16,Henaff15}; in existing solutions pixels are considered as nodes and edges connect neighboring pixels.  Despite their relative success for some pattern classification tasks including optical character recognition (on widely used benchmarks such as MNIST), these methods do not straightforwardly extend to general graphs with arbitrary topological characteristics (variable number of nodes/edges, heterogeneous degrees, etc.) and this limits their applicability to other challenging tasks such as action recognition.  Recent attempts, to extend these methods to action recognition  \cite{STGCN,ASGCN,PartGraph}, include \cite{STGCN}  which models connectivity of moving joints in videos using graphs where nodes correspond to joints (described by spatial coordinates and their likelihoods) and edges characterize their spatio-temporal interactions. One of the drawbacks of these extensions resides in the limited representational power of joints and also the difficulty in achieving permutation invariance; in other words, parsing and describing joints while being invariant to arbitrary reordering of objects especially for highly complex scenes with multiple interacting objects/persons.  From the machine learning point of view, GCN operates either directly in the spatial domain \cite{Petar18,Monti17,Bronstein16,SAGECONV18,AGNNCONV18,SGCCONV19,APPNP19,ECC,monte_carlo,covariant_GNN,motion_graph,GRAPHCONV18,GAT18,SplineCNN} or require a preliminary step of spectral decomposition of graphs using Fourier basis \cite{Ortega13,Ortega18,irregularity_GSP,vertex_analysis}  prior to achieve convolution \cite{Bresson16,kipf17,Henaff15,bruna13,Bronstein17,ARMACONV19,wavelet_GCN,scattering_GCN,diff_scattering}. While graph convolution in the  spectral domain is well defined, its success heavily relies on the choice of the laplacian operators \cite{Laplacian-topology}  that capture the topology of the manifolds enclosing data.  These laplacians, in turn, depend on many hyper-parameters which are difficult to set using tedious cross-validation especially when training GCNs on large-scale datasets. \\
\indent In this paper, we address the aforementioned issues (mainly laplacian design in GCNs and permutation invariance) for the particular task of   action recognition. Our solution achieves  convolution in the spectral domain using a new design principle that considers a convex combination of  several laplacian operators; each laplacian is dedicated to a particular (possible) topology of our graphs. We also introduce a novel context-dependent pooling operator  that proceeds in two steps: node features are first expanded with their contexts and then globally averaged; the strength of this two-step pooling process resides in its ability to preserve/enhance the discrimination power of node representations while achieving permutation invariance. The validity of these contributions is corroborated through extensive experiments, in action recognition, using the challenging SBU-skeleton and UCF-101 datasets.  
\section{Graph Construction}\label{graphc}
In this section, we briefly  describe the video processing used to build our input graphs. This step consists in extracting and grouping joints (a.k.a keypoints) into trajectories prior to modeling their spatio-temporal interactions with graphs. \\
\indent Given a raw video, skeletons are obtained by detecting human joints in successive frames using the state of the art human pose extractor~\cite{Yaser17}\footnote{This processing is only reserved to raw video datasets (including UCF~\cite{UCF101}) while for other databases, such as SBU~\cite{SBU12},  skeletons are already available.};  as these keypoints are labeled (see Fig.~\ref{fig1}), their trajectories are extracted by simply tracking keypoints with the same labels. Considering a finite collection of trajectories, we build an adjacency graph $\G=(\V,\E)$ where each node $v \in \V$ corresponds to a labeled trajectory and an edge $(v,v') \in \E$ exists between two nodes iff the underlying trajectories are spatially neighbors. Each trajectory (i.e., node in $\G$) is described by aggregating motion and appearance streams as shown subsequently.\\
\begin{figure*}[hpbt]
  \begin{center}
    \centerline{\scalebox{0.49}{\input{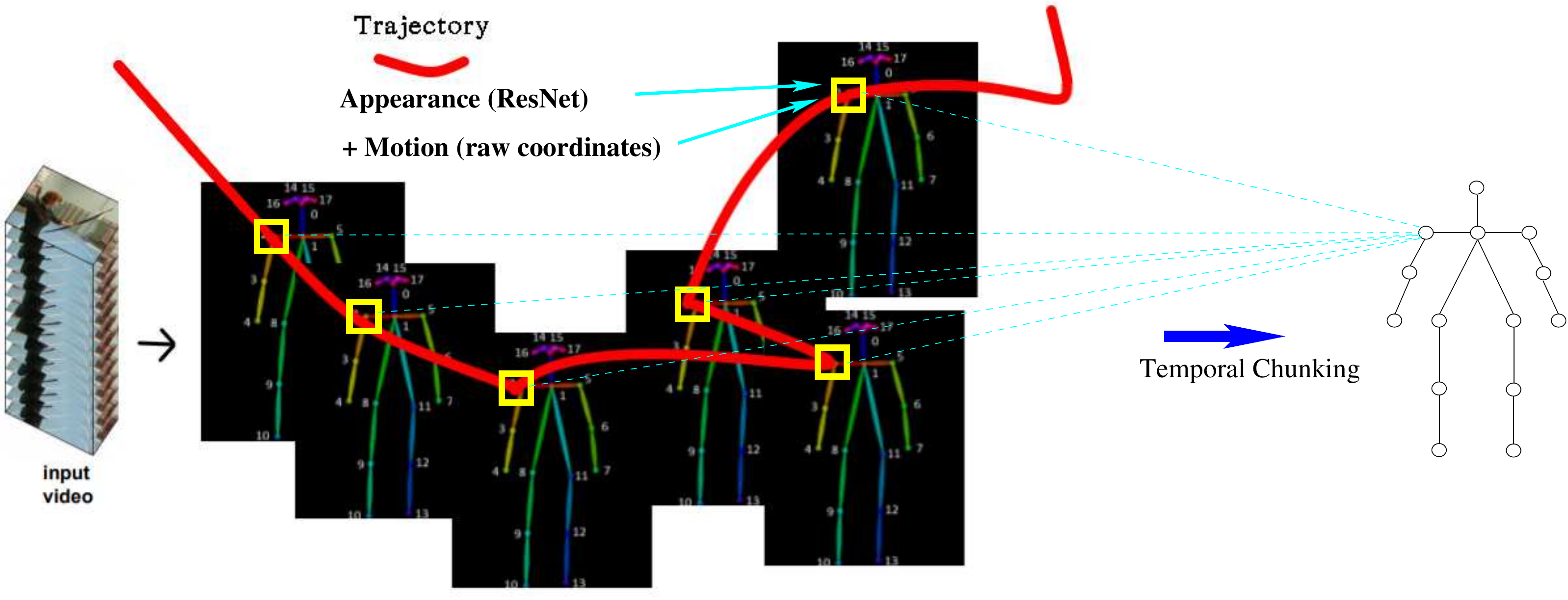_t}}}
\caption{This figure shows the whole keypoint extraction, tracking and description process on motion and appearance streams {(see also the detailed protocol in the supplementary material \cite{SMBM19}).}} \label{fig1}
\end{center}
\end{figure*}

\noindent {\bf Motion stream.} Considering a video as a sequence of skeletons, we process the underlying trajectories using {\it temporal chunking}: first we split the total duration of a  video into $C$ equally-sized temporal chunks ($C=4$ in practice), then we assign  the keypoint coordinates of  a given trajectory $v$  to the $C$ chunks (depending on their time stamps) prior to concatenate the averages of these chunks and this produces the description of $v$ denoted as $\psi(v)$. Hence, trajectories,  with similar keypoint coordinates but arranged differently in time, will be considered as very different. Note that beside  being compact and discriminant (as shown later in table 3(c)), this temporal chunking gathers advantages --  while discarding drawbacks -- of two widely used families of techniques mainly {\it global averaging techniques} (invariant but less discriminant)  and  {\it frame resampling techniques} (discriminant but less invariant). Put differently, temporal chunking produces discriminant descriptions that preserve the temporal structure of trajectories while being {\it frame-rate} and {\it duration} agnostic. \\
\noindent {\bf Appearance stream.} Similarly to motion, we also describe each trajectory using appearance features. First, we apply ResNet~\cite{resnet_imagenet} frame-wise\footnote{We consider a local neighborhood around each keypoint in order to extract these convolutional features.} in order to collect convolutional features associated to different keypoints (see again Fig.~\ref{fig1}), then we aggregate those convolutional features through trajectories using temporal chunking as described above for motion stream.

\section{Multi-Laplacian Convolutional Networks}
\indent Given a collection of videos, we describe each one using a graph $\G_i=(\V_i,\E_i)$ as shown in section~\ref{graphc}. For each node $v \in \V_i$, we extract two feature vectors, denoted $\psi_m(v)$, $\psi_a(v)$, respectively corresponding to  motion and appearance streams of $v$. We also define a similarity between nodes in $\V_i$ as $k_m(v,v')=\exp(-\|\psi_m(v)-\psi_m(v')\|_2^2/\sigma_m)$, here $\sigma_m$ is the scale of the gaussian similarity and $\|.\|_2$ is the $\ell_2$ norm. Similarly, we define   $k_a(v,v')$ using appearance features. In the remainder of this paper, unless explicitly mentioned, we denote a given graph $\G_i$ simply as $\G$. We also denote  motion and appearance features $\psi_m(v)$, $\psi_a(v)$ as $\psi(v)$, scales $\sigma_m$, $\sigma_a$  as $\sigma$, and similarities $k_m(v,v')$, $k_a(v,v')$ as $k(v,v')$.\\
\indent The goal is to design a GCN that returns the representation and the classification of a given graph. This includes  {\it a novel design of laplacian convolution and pooling} on graphs as shown subsequently.
\subsection{Spectral  graph convolution at a glance} \label{conv0}
\noindent  Given a graph $\G=(\V,\E)$ with $|\V|=n$, $|\E|$ being respectively the number of its vertices and edges and  $\L$ the laplacian of $\G$; for instance,  $\L$ could be the normalized, unormalized or random walk laplacians respectively defined as  $\L=\I_{n}-\D^{-1/2}\ \A \ \D^{-1/2}$,   $\L=\D- \A$ and $\L=\D^{-1} \A$ where $\I_{n}$ is an $n \times n$ identity matrix, $\A$ is the affinity matrix built as $[\A]_{vv'}=\mathds{1}_{\{(v,v')\in \E\}}$ or by using the gaussian similarity $k(.,.)$ as $[\A]_{vv'}=\mathds{1}_{\{(v,v')\in \E\}}.k(\psi(v),\psi(v'))$ and  $\D$ a diagonal degree matrix with each diagonal entry $[\D]_{vv}= \sum_{v'} [\A]_{vv'}$. Considering the eigen-decomposition of $\L$ as $\U \Lambdaa \U'$ with $\U$, $\Lambdaa$ being respectively the matrix of its eigenvectors (graph Fourier modes) and the diagonal matrix of its non-negative eigenvalues, spectral graph convolution is a well defined operator  (see for instance \cite{Bresson16})  which is achieved by first projecting a given graph signal $\psi(.)$ using the eigen-decomposition of  $\L$, and then multiplying the resulting projection by a convolutional filter prior to back-project the result in the original signal space. \\
\indent Formally, the convolutional operator $\star_{\G}$ (rewritten for short as $\star$) on the graph signal $\psi(\V) \in \mathbb{R}^{n \times p}$  is  $ (\psi \star  g_{\thetaa})(\V) =  \U \ g_{\theta}(\Lambdaa) \ \U' \ \psi(\V)$;  here  $g_{\theta}$ denotes a non-parametric convolutional filter defined as $g_{\thetaa}(\Lambdaa)={diag}(\thetaa)$ with $\thetaa \in \mathbb{R}^{n}$. As this filter is not localized, we consider instead \cite{Bresson16} 
        \begin{equation}\label{eq00}
           (\psi \star  g_{\thetaa})(\V) := \sum_{k=0}^{K-1} \theta_{k} \ T_{k}(\L)  \ \psi(\V),  
        \end{equation} 
        \noindent with $K$ fixed and $\thetaa=(\thetaa_1 \dots \thetaa_K)' \in  \mathbb{R}^{K}$ being its learned convolutional filter parameters; in practice, we consider a rescaled version of the laplacian (i.e., $2 \L / \lambda_{max} - \I_{n}$  instead of $\L$ with  $\lambda_{max}$  being its largest eigenvalue). In the above equation,  $T_{k}$ is the $k$-th order Chebyshev polynomial recursively defined as $T_{k}(\L)=2 \L \ T_{k-1}(\L)-T_{k-2}(\L)$, with $T_{k}(\L) \in  \mathbb{R}^{n\times n}$  and $T_{0}=\I$, $T_{1}=\L$ (for more details see again \cite{Bresson16}).
\subsection{Multi-Laplacian design}

The success of the aforementioned  convolutional process is highly dependent on the {\it relevance} of the used laplacian, which in turn depends on the appropriate choice of the affinity matrix of the graph and its hyper-parameters. Hence, knowing a priori which parameter to choose could be challenging and usually relies on the tedious cross-validation. \\
\indent Our alternative contribution in this paper aims at designing convolutional laplacian operators while learning the topological structure of the input graphs (characterized by their laplacians). Starting from different {\it elementary} laplacians\footnote{also referred to as single or individual laplacians.} associated to multiple settings (for instance, by varying the scale $\sigma$ of the gaussian similarity $k(.,.)$ and the laplacians), we train  a {\it multiple laplacian} as a deep nonlinear combination of multiple elementary laplacians. Fig.~\ref{fig2} shows our learning framework with $d$-layers in the multi-laplacian; for each layer $\ell+1$ ($\ell \in \{0,\dots,d-1\}$) and its associated unit $p \in \{1,\dots, n_{\ell+1}\}$, a laplacian (denoted $\L_{p}^{\ell+1}$) is recursively defined as
\def\w{{\bf w}}
\begin{equation}\label{eq00}
  \L_p^{\ell+1} = g\bigg(\sum_{q=1}^{n_\ell} \ \w_{q,p}^{\ell} \  \L_q^{\ell}\bigg),
\end{equation} 
\noindent where $g$ is a nonlinear activation function (see details in section~\ref{definite}), $n_{\ell}$ is the number of units in layer $\ell$ and $\{\mathbf{w}^{\ell}_{q,p}\}_q$ are the (learned) weights associated to ${\L}_{p}^{\ell+1}$. For any given graph $\G$, a tensor of multiple elementary laplacians  $\{{\L}_{q}^{1}\}_q$ (associated to different combinations of $\{\sigma\}$ and standard laplacians namely unormalized, normalized, random walk, etc.) on $\G$  is considered as an input to our deep network. These elementary laplacians are then forwarded to the subsequent intermediate layer resulting into $n_2$ multiple laplacians through the nonlinear combination of the previous layer, etc. The final laplacian ${\L}_{1}^{d}$  is a highly nonlinear combination of elementary laplacians. We notice that the deep laplacian network in essence is a multi-layer perceptron (MLP), with nonlinear activation functions which is fed (together with the graph signal $\psi(\V)$) as input in order to achieve convolution (see Fig.~\ref{fig2}). Hence, we can use standard  backpropagation in order to optimize the parameters of both the MLP and the GCN networks. Let $J$ denotes the loss function associated to our classification problem (namely cross-entropy); starting from the gradients of this loss $J$ w.r.t the final softmax output, we use the chain rule in order to backpropagate the gradients w.r.t different layers and parameters (fully connected and convolutional layers as well as the MLP of the multi-laplacians), and to update these parameters accordingly using gradient descent. 
\begin{figure*}[hpbt]
\centerline{\scalebox{0.55}{\input{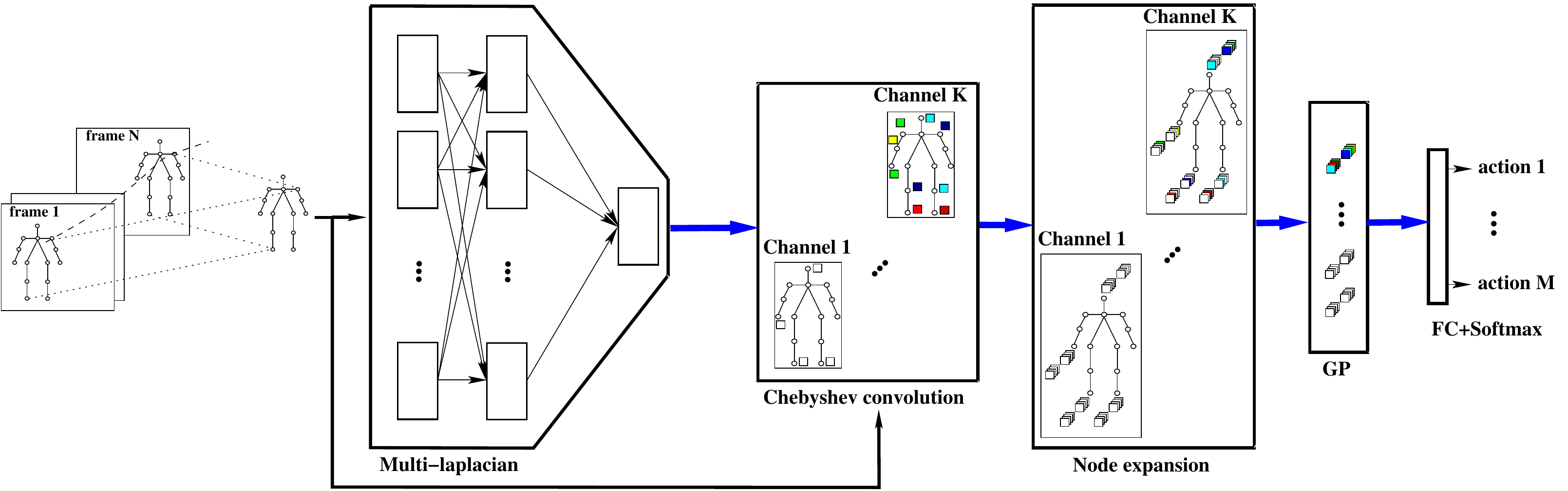_t}}}
\caption{This figure shows the architecture of our multi-laplacian graph convolutional network (MLGCN). First, multiple elementary laplacians (associated to  $\G=(\V,\E)$) and graph signal $\psi(\V)$ are fed as input to an MLP in order to learn the best combination of laplacians. Then, Chebyshev decomposition  is achieved using the learned multi-laplacian in order to perform graph convolution, followed by node expansion and global average pooling prior  to softmax classification {\bf (better to zoom the pdf)}.} \label{fig2}
\end{figure*}

\subsection{Activation functions and optimization}\label{definite}
We consider two activation functions $g$ in Eq.~(\ref{eq00}): ReLU and leaky ReLU~\cite{softplus1,softplus2,leaky}. Note that only leaky ReLU provides negative entries in the learned laplacians and both of these activations allow  learning {\it conditionally} positive definite (c.p.d) laplacian matrices. In what follows, we discuss  the sufficient conditions about the choices of the elementary input laplacians, the parameters  $\{{\bf w}_{q,p}^\ell\}$ and the activation functions that guarantee this c.p.d property.
\begin{definition}[conditionally positive definite laplacians]\label{def0} 
A laplacian matrix $\L$ is conditionally positive definite, iff $\forall c_1,\dots, c_n \in \mathbb{R}$ {(\bf with $\sum_{i=1}^n c_i = 0$)}, $\sum_{i,j} c_i c_j \L_{ij} \geq 0$.
\end{definition} 
From the above definition, it is clear that any positive definite laplacian is also c.p.d. The converse is not true, however c.p.d is a weaker (but sufficient) condition  in order to derive positive definite laplacians (see following propositions).
\def\J{{J}}
\begin{proposition}[Berg et al.\cite{berg84}]\label{prop1}
  Consider  $\L_{i,j}$ as an entry of a matrix $\L$ and define $\hat{\L}$ with
\begin{eqnarray}\label{eq0101}
  \hat{\L}_{i,j}&=& \L_{i,j} - \L_{i,n+1} -  \L_{n+1,j} +  \L_{n+1,n+1}
\end{eqnarray}
Then, $\hat{\L}$ is positive definite if and only if  ${\L}$ is c.p.d.
\end{proposition}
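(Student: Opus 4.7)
The plan is to reduce both directions to a single algebraic identity that relates the quadratic form of $\hat{\L}$ on $\mathbb{R}^n$ to the quadratic form of $\L$ on the hyperplane $\{c\in\mathbb{R}^{n+1}:\sum_i c_i=0\}$. Concretely, given arbitrary $c_1,\dots,c_n\in\mathbb{R}$, I would introduce the auxiliary scalar $c_{n+1}:=-\sum_{i=1}^n c_i$, so that by construction $\sum_{i=1}^{n+1} c_i = 0$, putting the extended vector exactly in the class of coefficients over which the c.p.d.\ property of $\L$ is tested.

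Next I would expand $\sum_{i,j=1}^n c_i c_j \hat{\L}_{i,j}$ using the defining formula~(\ref{eq0101}) for $\hat{\L}$. Each of the four resulting double sums factorises cleanly because three of the four terms in (\ref{eq0101}) depend on at most one of the indices $i,j$: using $\sum_{i=1}^n c_i = -c_{n+1}$, the cross terms $\sum_{i,j}c_ic_j\L_{i,n+1}$ and $\sum_{i,j}c_ic_j\L_{n+1,j}$ collapse to $-c_{n+1}\sum_i c_i\L_{i,n+1}$ and $-c_{n+1}\sum_j c_j\L_{n+1,j}$, while the constant term becomes $c_{n+1}^2\L_{n+1,n+1}$. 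Regrouping shows that the signs are exactly those needed to recognise the missing row/column of indices equal to $n+1$, yielding the key identity
\begin{equation}\label{eq:keyid}
\sum_{i,j=1}^{n} c_i\, c_j\, \hat{\L}_{i,j} \;=\; \sum_{i,j=1}^{n+1} c_i\, c_j\, \L_{i,j},
\end{equation}
valid whenever $c_{n+1}=-\sum_{i=1}^n c_i$.

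With (\ref{eq:keyid}) in hand, both implications are immediate. For the ``if'' direction, assume $\L$ is c.p.d.; for any $c_1,\dots,c_n$, the extension with $c_{n+1}=-\sum c_i$ satisfies $\sum_{i=1}^{n+1}c_i=0$, so the right-hand side of (\ref{eq:keyid}) is $\geq 0$ by Definition~\ref{def0}, and hence $\hat{\L}$ is positive definite. For the ``only if'' direction, take any $c_1,\dots,c_{n+1}$ with $\sum c_i=0$; then $c_{n+1}$ is determined by $c_1,\dots,c_n$ exactly as above, so the right-hand side of (\ref{eq:keyid}) equals the left-hand side, which is $\geq 0$ by positive definiteness of $\hat{\L}$. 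Since the $c_i$'s were arbitrary subject to summing to zero, $\L$ is c.p.d.

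The only delicate point, rather than a genuine obstacle, is bookkeeping the definiteness vs.\ semi-definiteness distinction. Since the correspondence $c\mapsto(c_1,\dots,c_n,-\sum c_i)$ is a linear bijection between $\mathbb{R}^n$ and the hyperplane $\{c\in\mathbb{R}^{n+1}:\sum c_i=0\}$, a nonzero vector on one side corresponds to a nonzero vector on the other, so strict positivity is preserved in both directions; this handles the ``positive definite'' phrasing of the statement without extra work.
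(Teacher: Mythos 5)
Your proof is correct: the identity $\sum_{i,j=1}^{n} c_i c_j \hat{\L}_{i,j} = \sum_{i,j=1}^{n+1} c_i c_j \L_{i,j}$ under the extension $c_{n+1}=-\sum_{i=1}^n c_i$ checks out by direct expansion, and the linear bijection between $\mathbb{R}^n$ and the zero-sum hyperplane of $\mathbb{R}^{n+1}$ gives both directions at once. The paper itself only defers this proposition to its supplementary material, but your argument is the standard one for this classical lemma of Berg et al., so there is nothing to flag.
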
 
\begin{proof} See the supplementary material \cite{SMBM19}. Now we derive our main result: \end{proof} 
\begin{proposition}\label{prop0}
Provided that the input elementary laplacians $\{\L_q^{1}\}_{q}$ are c.p.d, and  $\{{\bf w}_{q,p}^{\ell}\}_{p,q,\ell}$ belong to the positive orthant of the parameter space, any combination $g(\sum_q {\bf w}_{q,p}^{\ell} \ \L_q^{\ell})$, with $g$ equal to ReLU or leaky ReLU, is also c.p.d.
\end{proposition}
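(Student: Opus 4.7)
The plan is to proceed by induction on the layer index $\ell$, maintaining two invariants throughout: (i) every $\L_q^{\ell}$ is c.p.d.\ in the sense of Definition~\ref{def0}, and (ii) every $\L_q^{\ell}$ has non-negative diagonal entries and non-positive off-diagonal entries (the sign pattern shared by the normalized, unnormalized, and random-walk laplacians built on graphs with non-negative adjacency, from which the elementary laplacians $\{\L_q^{1}\}_q$ are drawn). The base case $\ell=1$ is immediate from the hypothesis of the proposition together with the standard construction of these elementary laplacians.

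For the inductive step I first treat the pre-activation $M := \sum_{q=1}^{n_\ell} {\bf w}_{q,p}^{\ell}\, \L_q^{\ell}$. For any $c \in \mathbb{R}^{n}$ with $\sum_i c_i = 0$,
\begin{equation*}
\sum_{i,j} c_i c_j\, M_{ij} \;=\; \sum_{q} {\bf w}_{q,p}^{\ell} \Bigl(\sum_{i,j} c_i c_j\, \L_{q,ij}^{\ell}\Bigr) \;\geq\; 0,
\end{equation*}
since the weights lie in the positive orthant and each parenthesized sum is non-negative by the induction hypothesis. The same non-negative linear combination preserves invariant (ii), because every entry of $M$ is a non-negative combination of entries of matching sign in the $\L_q^{\ell}$.

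The delicate stage is the nonlinearity. For ReLU the non-positive off-diagonal entries of $M$ collapse to zero while the non-negative diagonal is retained, so $\mathrm{ReLU}(M)$ is a diagonal matrix with non-negative entries, hence PSD and in particular c.p.d., and invariant (ii) is trivially preserved. For leaky ReLU with slope $\alpha \in (0,1)$, I would use the pointwise identity $\mathrm{LReLU}(x) = \alpha x + (1-\alpha)\max(x,0)$ to write $\mathrm{LReLU}(M) = \alpha\, M + (1-\alpha)\, \mathrm{ReLU}(M)$, a non-negative combination of two matrices already shown to be c.p.d.; the off-diagonals are merely scaled by $\alpha$, so invariant (ii) carries over as well. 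The hard part I anticipate is making invariant (ii) watertight across all layers: the whole entrywise argument hinges on the combined laplacian having non-positive off-diagonals, and any elementary laplacian falling outside this class (e.g.\ a nonsymmetric random-walk transition $\D^{-1}\A$) would break the collapse step for ReLU. An alternative route would lift $M$ to a positive definite matrix via Proposition~\ref{prop1} and try to control how the entrywise $g$ interacts with the Berg centering $\hat{\cdot}$, but this looks considerably more cumbersome than exploiting the sign pattern directly.
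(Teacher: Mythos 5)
Your first step (a non-negative combination of c.p.d.\ matrices is c.p.d.) matches the paper, and your leaky-ReLU decomposition into ``$a\L$ plus a ReLU-type term'' is structurally the same move the paper makes. The gap is exactly where you anticipated it, and it is fatal rather than merely delicate. Your invariant (ii) is not a hypothesis of the proposition, which assumes only that the elementary laplacians are c.p.d.; worse, it already fails at the base case for one of the paper's own elementary laplacians, the random-walk operator $\D^{-1}\A$, whose off-diagonal entries are non-negative, so the ``collapse to a diagonal matrix'' step never happens for that branch of the input tensor. And without the sign pattern there is no rescue: an entrywise $\max(\cdot,0)$ applied to a general c.p.d.\ matrix does not preserve c.p.d.-ness, so the argument cannot be repaired locally. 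A further symptom that the hard-ReLU reading is not the intended one: if your argument did go through, every laplacian beyond the first hidden layer would be diagonal, i.e.\ carry no graph structure at all.

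The paper avoids all of this by taking $g$ to be the smooth surrogates $g(\L)=\log(1+\exp(\L))$ (softplus, standing in for ReLU) and $g(\L)=\log(\exp(a\L)+\exp(\L))$ with $0<a\ll 1$ (leaky ReLU), applied entrywise, and by running the alternative route you dismissed as cumbersome. Concretely: if $\L$ is c.p.d., the Berg centering of Proposition~\ref{prop1} gives a positive definite $\hat\L$, and the factorization
$\sum_{i,j} c_i c_j \exp(\L_{ij}) = \exp(\L_{n+1,n+1})\sum_{i,j}\bigl(c_i\exp(\L_{i,n+1})\bigr)\bigl(c_j\exp(\L_{n+1,j})\bigr)\exp(\hat\L_{ij})$
shows that the entrywise exponential $\exp(\L)$ is positive definite for \emph{all} coefficient vectors, with no constraint on signs of entries. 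Hence $(1+\exp(\L))^{\circ\alpha}=\exp(\alpha\, g(\L))$ is positive definite for every $\alpha>0$, and Schoenberg's theorem then yields that $g(\L)$ is c.p.d. The leaky case is reduced to this one via $g(\L)=a\L+\log(1+\exp((1-a)\L))$ and closure of c.p.d.\ under non-negative sums. The essential difference from your proposal is that this argument uses only the c.p.d.\ hypothesis --- no entrywise sign assumptions --- which is what the proposition actually provides.
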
 
\begin{proof} 
\noindent Details of the first part of the proof, based on recursion, are omitted and result from the application of definition~(\ref{def0}) to $\L=\sum_q {\bf w}_{q,p}^{\ell} \ \L_q^{\ell}$ (for different values of $\ell$) while considering $\{\L_q^{1}\}_q$ c.p.d. Now we show the second part of the proof (i.e., if $\L$ is c.p.d, then $g(\L)$ is also c.p.d for ReLU and leaky ReLU).\\

\noindent i) For $g(\L)=\log(1+\exp(\L))$ [ReLU]: considering $\L$ c.p.d, and following proposition~(\ref{prop1}), one may define a positive definite $\hat{\L}$ and obtain $\forall \{c_i\}$
\begin{eqnarray*}
 \sum_{i,j=1}^n c_i c_j \exp(\L_{i,j})  & = &  \exp(\L_{n+1,n+1})  \sum_{i,j=1}^n (c_i \exp(\L_{i,n+1})) .  (c_j \exp(\L_{n+1,j})) . \exp(\hat{\L}_{i,j} \big)\\
  &\geq &0 
 \end{eqnarray*}
so $\exp(\L)$ is also positive definite. Besides, for any arbitrary $\alpha>0$,  $(1+\exp(\L))^{\circ \alpha}$ is also  positive definite with $\circ \alpha$ being the entrywise matrix power. By simply rewriting $(1+\exp(\L))^{\circ \alpha}=\exp(\alpha \ g(\L))$, it follows (from \cite{Shoenberg38}) that $g(\L)$ is c.p.d since $\exp(\alpha \ g(\L))$ is positive definite for all $\alpha>0$.\\

\noindent ii) For $g(\L)=\log(\exp(a \L) +\exp(\L))$ with $0<a\ll 1$  [leaky-ReLU]: one may write $g$ as
\begin{eqnarray}\label{eq1112}
g(\L)=a \ \L + \log(1 +\exp((1-a) \ \L)).
\end{eqnarray}
Since $\exp(\L)$ is positive definite, it follows that $(1+\exp((1-a) \ \L))^{\circ \alpha}$ is also positive definite for any arbitray $\alpha >0$ and $0<a\ll1$  so from \cite{Shoenberg38}, $\log(1 +\exp((1-a) \ \L))$ is c.p.d and so is $g(\L)$; the latter results from the closure of the c.p.d with respect to the sum. \\ 
\begin{flushright}$\blacksquare$ \end{flushright}
\end{proof}

From proposition~(\ref{prop0}), provided that i) the elementary laplacians are c.p.d, ii) the activation function $g$ preserves the c.p.d (as  ReLU and leaky-ReLU) and iii) weights $\{{\bf w}_{q,p}^{\ell}\}$  are positive, all the resulting multiple laplacians in Eq.~\ref{eq00} will also be c.p.d and admit equivalent positive definite laplacians (following proposition~\ref{prop1}), and thereby spectral graph convolution can be achieved. Note that conditions (i) and (ii) are satisfied by construction while condition (iii) requires adding equality and inequality constraints to Eq.~\ref{eq00}, i.e., $\w_{q,p}^{\ell} \in [0,1]$ and $\sum_{q}  \w_{q,p}^{\ell}=1$.  In order to implement these constraints, we consider a reparametrization in Eq.~\ref{eq00} as $\w_{q,p}^{\ell}= f(\betaa_{q,p}^{\ell})\slash {\sum_q f(\betaa_{q,p}^{\ell})}$ for some $\{\hat{\w}_{q,p}^{\ell}\}$
with $f$ being strictly monotonic real-valued (positive) function and this allows free settings of the parameters  $\{\hat{\w}_{q,p}^{\ell}\}$ during optimization while guaranteeing $\w_{q,p}^{\ell} \in [0,1]$ and $\sum_{q}  \w_{q,p}^{\ell}=1$. During backpropagation, the gradient of the loss $J$ (now w.r.t $\betaa$'s) is updated using the chain rule as
\begin{equation}
  \begin{array}{lll}
    \frac{\partial J}{\partial \betaa_{q,p}^{\ell}} &=& \frac{\partial J}{\partial \w_{q,p}^{\ell}} . \frac{\partial \w_{q,p}^{\ell}}{\partial \betaa_{q,p}^{\ell}}  \ \ \ \ \ \ \textrm{with}  \ \ \  \frac{\partial \w_{q,p}^{\ell}}{\partial \betaa_{q,p}^{\ell}} =\frac{f'(\betaa_{q,p}^{\ell})f(\sum_{r\neq q } \betaa_{r,p}^{\ell})}{(f( \betaa_{q,p}^{\ell})+f(\sum_{r\neq q} \betaa_{r,p}^{\ell}))^2},
                                                                                                                                                                                         \end{array}
                                                                                                                                                                                           \end{equation} 
\noindent in practice $f(.)=\exp(.)$ and   $\frac{\partial J}{\partial \w_{q,p}^{\ell}}$ is obtained from layerwise gradient backpropagation (as already integrated in standard deep learning tools including PyTorch and TensorFlow).  Hence,  $\frac{\partial J}{\partial \betaa_{q,p}^{\ell}}$ is obtained by multiplying the original gradient $\frac{\partial J}{\partial {\w}_{q,p}^{\ell}}$ by $\frac{\exp(\sum_{r} \betaa_{r,p}^{\ell})}{(\exp( \betaa_{q,p}^{\ell})+\exp(\sum_{r\neq q} \betaa_{r,p}^{\ell}))^2}$.
\subsection{Pooling}
If pooling  on regular grids (or vectorial data in general) is well defined, it is not the case for graphs  \cite{spectral_coarsening}. As a consequence, most of GCN architectures do not include pooling layers in their architectures \cite{kipf17, matrix_completion} excepting a few attempts which try to incorporate pooling in a non explicit way using multi-level graph coarsening  (i.e., by reducing graphs by a factor of two at each level and describing each node by the average or the max of its descendants \cite{Bresson16,graculus} or by using clustering  \cite{SpasePooling,DiffPooling} and reordering  \cite{SetOr,DGCNN,ECC,PointNet}).  For highly irregular graphs (e.g., with heterogeneous degrees), this graph coarsening process usually results into imbalanced hierarchical representations and this substantially affects the accuracy of the learned  graph representations.  In practice, existing methods  (for instance \cite{Bresson16}) add fake nodes in the input graphs in order to rebalance the coarsening process. However,  fake nodes are spurious  and this may lead to contaminated graph representations after coarsening.  Besides, this pooling process is not invariant to node permutations and  node reordering (based on automorphisms) cannot guarantee permutation invariance  for general and irregular graphs.\\
\indent In this work, we consider an alternative solution in order to achieve pooling. Our method relies on two steps:  an expansion-step is first achieved at the node level followed by a global average pooling in order to achieve permutation invariance. Note that the first step (expansion) is necessary in order to generate high dimensional (and sparse) node representations and hence preserve the discrimination power of nodes before applying the second step of global average pooling. Put differently, without expansion, average pooling achieves permutation invariance but dilutes node information and this results into less discriminant graph representations as shown in experiments (see also \cite{sahbicvpr08a,sahbipami11,sahbiijmir15,sahbiicvs13,sahbicbmi08}).    \\
\indent Considering $\N_r(v)$ as the set of $r$-hop neighbors of a given node $v \in \V$ and $\N_r(v)=\cup_{l=1}^L \N_r^l(v)$ as the union of $L$ subsets\footnote{In practice, each subset $\N_r^l(v)$ includes only nodes with labels equal to $l$ (see again node labels in Fig.~\ref{fig1}).}, the expansion of $v$ is defined as
 \begin{equation}
\phi(v) \leftarrow \left((\psi \star g_\theta)(v), \ \ \   \frac{1}{|\N_r^1(v)|}  \sum_{v' \in \N_r^1(v)} (\psi \star g_\theta)(v'), \dots,  \frac{1}{|\N_r^L(v)|}  \sum_{v' \in \N_r^L(v)} (\psi \star g_\theta)(v')\right).
\end{equation} 
\noindent For a large and fine-grained neighborhood system $\N_r(v)=\cup_{l=1}^L \N_r^l(v)$  (i.e., $r\geq 1$ and $L\gg1$), the expansion $\phi(v)$ takes into account not only the immediate neighbors of $v$ but also a large extent and this results into high dimensional, sparse and  discriminating representations. Finally, a global average pooling is performed (as  $\sum_{v \in \V} \phi(v)$) to achieve permutation invariance prior to the softmax fully connected classification layer (see again Fig.~\ref{fig2}).

\section{Experiments}
We evaluate the performance of our multi-laplacian graph convolutional networks (MLGCN) on the challenging task of action recognition, using two standard datasets: SBU kinect \cite{SBU12} and UCF-101  \cite{UCF101}. SBU is an interaction dataset acquired (under relatively well controlled conditions) using the Microsoft Kinect sensor; it includes in total 282 video sequences belonging to 8 categories:  ``approaching'', ``departing'', ``pushing'', ``kicking'', ``punching'', ``exchanging objects'', ``hugging'', and ``hand shaking''. In contrast, UCF-101 is larger and more challenging; it includes 13,320 video shots belonging to 101 categories with variable duration, poor frame resolution, viewpoint and illumination changes, occlusion, cluttered background and eclectic content ranging  from multiple and highly interacting individuals to single and completely passive ones. In all these experiments, we use the same evaluation protocols as the ones suggested in \cite{UCF101,SBU12} (i.e., split2 for UCF-101 and train-test split for SBU) and we report the average accuracy over all the classes of actions.

\begin{table}[ht]
\begin{center}
\resizebox{0.95\textwidth}{!}{
 \begin{tabular}{cc|c|ccccccccccccc|c}
  &  &  Binary & \multicolumn{13}{c}{Binary $\times$ Gaussian} &\multicolumn{1}{|c}{Multi-lap} \\
  &  &            & $10^{-6}\sigma$ & $10^{-5}\sigma$ &  $10^{-4}\sigma$ & $10^{-3}\sigma$ & $10^{-2}\sigma$ & $10^{-1}\sigma$  & $\sigma$ & $10 \sigma$ &  $10^2\sigma$ & $10^3\sigma$ & $10^4\sigma$ & $10^5\sigma$ &  $10^6\sigma$ & \\
  \hline
  \hline
   \multirow{3}{*}{\rotatebox{35}{Unormalized}} & $k=1$ & 93.00 & 92.32 &92.32 & 92.32 &92.32& 92.32 & 92.32 & 92.32  & 92.32 & 92.30 & 92.30 & 92.30  & 92.30 & 92.30 & 93.41 \\
                              & $k=4$  & 89.25  & 88.87 & 88.87& 88.87  & 88.87 & 88.87 &88.87 & 88.87  & 88.87 & 88.87 & 88.86 &88.86 &88.86 & 88.86 & 90.07 \\
                              & $k=32$  & 86.00 & 86.31 & 86.31& 86.31  & 86.31 & 84.31 &86.31 & 86.31  & 86.32 & 86.32 & 86.32 &86.32 &86.32 & 86.32 & 86.91 \\
  \hline
  \multirow{3}{*}{\rotatebox{35}{Normalized}} & $k=1$  & 93.00 & 92.28 & 92.28 & 92.28 &92.28 & 92.28 &92.28 &92.26  & 92.26 & 92.26 & 92.26 & 92.26  & 92.28 & 92.28 & 93.49 \\
                                   & $k=4$  &90.00 &89.36 & 89.36& 89.36  & 89.36 & 89.36 &89.36 & 89.36  & 89.38 & 89.38 & 89.39 &89.37 &89.37 & 89.37 & 91.49 \\
                                   & $k=32$  & 88.00 & 88.31 & 88.31& 88.31  & 88.31 & 88.31 &88.31 & 88.31  & 88.32 & 88.32 & 88.32 &88.32 &88.32 & 88.32 & 89.21 \\
 \hline
   \multirow{3}{*}{\rotatebox{35}{Random w}}  & $k=1$ &93.00 & 92.05& 92.05 & 92.06  & 92.05 &92.05 & 92.05& 92.05  &92.09 & 92.09 & 92.09 & 92.06  & 92.06 & 92.06 & 93.46  \\
                                   & $k=4$  &96.00 & 94.06 & 94.06 & 94.06  & 94.00 &94.00 & 94.00 & 94.01  & 94.00 &94.01 &94.00 & 94.00  & 94.00 &94.00 & 96.31 \\
                                   & $k=32$  &96.00 & 94.03 & 94.03& 94.03 & 94.03 &94.03 & 94.03 & 94.03  & 94.03 &94.02 &94.02 & 94.02  &94.02&94.02 & 96.29 \\
  \hline
      Multi-lap              &         &97.15& 94.61 &94.58 & 94.61   & 94.63& 94.63& 94.63 & 94.62  &94.63 &94.63 &94.63 &94.63  &94.63 &94.63 & \bf{98.6}\\
                         
 \end{tabular}}
\end{center}\caption{Performances on SBU for different elementary laplacians (normalized, unormalized and random walk) and their marginal and total combinations using MLGCN (note that our expansion+GP is used for pooling).  In this table, "binary"  means that $\A^k$ is used to build the elementary laplacian while "binary $\times$ gaussian" means that ``$\A^k \times$ gaussian similarity'' is used instead; for each graph $\G$, the scale  $\sigma$  of the gaussian similarity is taken as the average distance between node features in $\G$. See also table 5 in supplementary material \cite{SMBM19} including  results {\it without expansion}.}

\label{table1}
\end{table}
\begin{table}[ht]
\begin{center}
\resizebox{0.95\textwidth}{!}{
\begin{tabular}{cc|c|ccccccccccccc|c}
  &  &  Binary & \multicolumn{13}{c}{Binary $\times$ Gaussian} &\multicolumn{1}{|c}{Multi-lap} \\
  &  &            & $10^{-6}\sigma$ & $10^{-5}\sigma$ &  $10^{-4}\sigma$ & $10^{-3}\sigma$ & $10^{-2}\sigma$ & $10^{-1}\sigma$  & $\sigma$ & $10 \sigma$ &  $10^2\sigma$ & $10^3\sigma$ & $10^4\sigma$ & $10^5\sigma$ &  $10^6\sigma$ & \\
  \hline
  \hline
  \multirow{3}{*}{\rotatebox{35}{Unormalized}} & $k=1$ & 55.32 & 50.67 &  50.67 &  50.67  &  50.68 &  50.70 & 50.70 & 50.70  &  50.71 &  50.72 &  50.72 &  50.72  & 50.70 &  50.70 & 56.55  \\
                              & $k=4$  &  59.23 &  55.22 &  55.22 &  55.22  &  55.22 &  55.20 & 55.20 & 55.20  &  54.95 &  54.96 &  54.95 &  54.98  &  55.00 &  54.98 &  60.05 \\
                              & $k=32$  & 55.10 &  52.05 &  52.05 &  52.05  &  52.05 &  52.11 &  52.11 &  52.11  &  52.11 &  52.11 &  52.06 &  52.06  &  52.06 &  52.08 &  56.48\\
  \hline
  \multirow{3}{*}{\rotatebox{35}{Normalized}} & $k=1$  & 55.6 & 50.78 &  50.77 & 50.27  &  50.42 &  50.40 &  50.42 &  50.42  &  50.42 &  50.42 & 50.42 &  50.42  &  50.42 &  50.42 & 56.80 \\
                                   & $k=4$  & 59.45 & 55.32 &  55.35 &  55.35  &  55.00 & 55.00 & 54.60 &  54.60  &  54.60 &  54.60 & 54.60 & 54.60  &  54.60 &  54.60 & 60.35 \\
                                   & $k=32$  & 55.25 & 51.19 &  51.19 &  51.19  & 49.78 &  49.79 & 49.79 & 49.79  & 49.79 & 49.78 & 49.78 &  49.77  & 49.77 & 49.77 &  56.52
                            \\
 \hline
   \multirow{3}{*}{\rotatebox{35}{Random w}}      & $k=1$ & 60.09 &  58.00 &  58.00 &  57.98  & 58.00 & 58.00& 58.00 &  58.00  & 58.01 &  57.95 &  57.95 & 57.95  & 57.92 & 57.94 &  60.85 \\
                                   & $k=4$  &  61.63 & 58.05 &  58.05 & 58.05  &  58.05 &  58.05&  58.02 &  58.02  &  58.02 &  57.98 &  57.98 &  57.98  &  57.98 &  58.02 &  61.90 \\
                                   & $k=32$  &  60.23 &  58.02 & 58.02 &  58.02  &  58.02 &  58.01& 58.02 & 58.02  &  58.01 &  57.95 &  57.95 &  57.95  & 57.92 &  57.92 & 60.9 \\
  \hline
      Multi-lap              &         & 62.00  &  58.24 &  58.16 &  58.14  & 58.14 & 58.14 &  58.15 & 58.15  &  58.13&  58.14& 58.16 &  58.18  & 58.15 &  58.17 & \bf{63.27} \\
                         
\end{tabular}}
\end{center}
\caption{Performance on UCF;  see caption of  table \ref{table1} for the settings.  See also table 6 in supplementary material \cite{SMBM19} including  results {\it without expansion}.}\label{table2}

\end{table}

\indent  We trained our MLGCN for 150 epochs on UCF-101 (and  40 on SBU) using the PyTorch SGD optimizer and we  set the learning rate to 0.0006 (decayed by a factor  0.1 after 100 epochs) for UCF-101 and 0.7 for SBU. We set the batch size to $30$ and the Chebyshev order $K$ to $4$ using grid search and cross validation. All these experiments are run on  GPUs;  Tesla P100 (with 16 Go) for UCF-101 and  Titan X Pascal (with 12 Go) for SBU. No data augmentation is achieved.  Tables~\ref{table1} and \ref{table2} show a comparison of action recognition performances, using MLGCN against different baselines involving individual laplacians (normalized, unormalized, random walk built on top of different affinity matrices and scale parameters). In these tables, we show the results using expansion and global average pooling (GP). We also show in table~\ref{tab3}(a-c) the results for (i) different pooling strategies (no-pooling, only GP, feature propagation~\cite{SGCCONV19} and feature propagation+GP), (ii) various multi-laplacian depths and activation functions\footnote{As shown in table \ref{tab3}(b), performances improve/stabilize very quickly, as the depth increases, since  the size of the training set is limited compared to the large number of training parameters in the MLP of the multi-laplacian. These performances are consistently better when using leaky ReLU (compared  to ReLU) and this is explained by the modeling capacity of the former. Indeed, leaky ReLU reflects better the (positive and negative) values of our laplacians while ReLU cuts off all the negative values.} and (iii) different input graph descriptions (for SBU).   From all these results, we observe a clear and a consistent gain of MLGCN w.r.t all the individual laplacian settings; this gain is further amplified when using ``expansion+GP'' with a large spatial extent and a fine-grained neighborhood system $\N_r(v)=\cup_{l=1}^L \N_r^l(v)$  (i.e., $r\geq 1$ and $L\gg1$). This gain results from the {\it complementary aspects of the used elementary laplacians} and also the  {\it match} between the topological properties of the learned multiple laplacians and the actual topology of the manifolds enclosing the input graphs. Besides,  ``expansion+GP'' aggregates the representations of the learned GCN filters in a way that maintains their high discrimination power (at the node level) while achieving permutation invariance. The latter is clearly necessary especially when handing videos  with multiple interacting persons that frequently appear in interchangeable orders  (as in SBU and UCF).\\
\indent  Finally, we compare the classification performances   of our MLGCN against related methods ranging from standard machine learning ones (SVMs  \cite{TP19,SBU12}, sequence based such as LSTM and GRU \cite{DeepGRU,GCALSTM,STALSTM}, 2D/3D CNNs \cite{pose,kin3d,twostream14,TP19}  including  appearance and motion streams) to deep graph (no-vectorial) methods based on spatial and spectral convolution \cite{kipf17,SGCCONV19,Bresson16}. From the results in table \ref{compare},  MLGCN brings a substantial gain w.r.t state of the art graph-based methods on both sets, and provides comparable results with the best vectorial methods on SBU.  On UCF, while  vectorial methods are highly effective, their combination with our MLGCN (through a late fusion) brings an extra gain despite the fact that bridging the -- last few percentage -- gap is challenging, and this clearly shows its complementary aspect. 

\begin{table}[!htb]
\centering
\resizebox{0.38\linewidth}{!}{
    \begin{tabular}{c||cc|cc}
    
      Pooling  & \multicolumn{2}{c}{Single-lapl} & \multicolumn{2}{|c}{Multi-lap} \\
               &  SBU &  UCF  & SBU & UCF \\       
      \hline
      \hline 
            No pooling& 93.94 & 59.16 & 95.70& 61.20  \\
      Global Pooling (GP) & 93.90&59.10 & 95.62&61.17  \\               
         Features prop \cite{SGCCONV19} & 94.27&59.30 & 96.36&61.31  \\
         Features prop \cite{SGCCONV19} + GP & 94.30&59.26 &96.43&61.25  \\   \hline
         Exp ($r=1$, $L=1$)+GP  & 94.15&59.20 & 96.35&61.25  \\
         Exp ($r=2$, $L=1$)+GP& 94.32&59.33 & 96.42&61.30  \\ 
         Exp ($r=1$, $L=n$)+GP& \bf{96.00}&\bf{60.54 }& {\bf98.60}& \bf 63.27  \\        
    \end{tabular}} \\
    {\scriptsize (a)} \vspace*{0.5cm}

\resizebox{0.29\linewidth}{!}{
    \begin{tabular}{c||cc|cc}
Depth&   \multicolumn{2}{c}{Leaky ReLU}   &   \multicolumn{2}{|c}{ReLU} \\  
      & SBU & UCF & SBU & UCF \\
      \hline
      \hline
   1    & \textbf{98.60}&63.10 & 98.57&63.07     \\
   2    & 98.56&\textbf{63.27} & 98.52&63.25     \\  
   3     & 98.30&63.27 & 98.23&63.23  \\
   \end{tabular}} \\
 {\scriptsize (b)}                                 \vspace*{0.5cm}

               \resizebox{0.33\linewidth}{!}{
    \begin{tabular}{c||c}
        skeleton representation & accuracy\\
      \hline
      \hline
      Cloud of joints&  31.65/34.25 \\
         Spatio-temporel skeletons&  36.10/38.00  \\
         Orthocentred joints&  43.25/45.80   \\
         Cylindrical features \cite{cylindrical2,Beyer87} & 38.42/40.10      \\
          3D  coord $+$velocity features \cite{velocity1}   & 38.50/40.20 \\
        Joint joint orientation  \cite{GeoFeat17}   & 74.95/ 76.20\\
        Joint line distance \cite{GeoFeat17}   &85.60/ 87.50\\
           \hline
        Our temporal chunking (sec 2)    & \bf{96.00}/ \bf{98.60}  
 
              \end{tabular}} \\
  {\scriptsize (c) } \vspace*{0.5cm}
\\
  \caption{(a) Behavior of our MLGCN with and without expansion, i.e., after its ablation and replacement with other pooling methods. Note that results with the best single laplacians taken from tables \ref{table1} and \ref{table2} are also shown. (b) Behavior of our MLGCN w.r.t different depths and activation functions. (c) Performance of MLGCN on SBU for different state of the art skeleton graph/node representations; again results are also shown for the best underlying single laplacians (taken from tables \ref{table1} and \ref{table2}).  In this table, "Cloud of joints" stand for graphs based on the similarity between all the keypoints of different frames; "Spatio-temporel skeleton" graphs are obtained by computing intra-frame joint similarity and by connecting them to their predecessors and successors through frames;  "Orthocentered joints" are obtained by centering the keypoint coordinates of each skeleton in each frame. Details about the other used node features (namely "Cylindrical features", "3D  coord $+$ velocity features", " Joint joint orientation" and   "Joint line distance") can be found in  \cite{cylindrical1,cylindrical2,Beyer87,velocity1,velocity2,GeoFeat17}.  }\label{tab3}
\end{table}

\begin{table}[!htb]
\resizebox{1.02\linewidth}{!}{
  \begin{tabular}{ccccc|cccccccccccccc|ccccc|cccccccccccccc}
  \hline
    \multicolumn{19}{c}{SBU}  & \multicolumn{12}{|c}{UCF} \\
   \hline 
     \multicolumn{5}{c}{Graph methods}  & \multicolumn{14}{|c}{Vectorial methods} &  \multicolumn{5}{|c}{Graph methods}  & \multicolumn{7}{|c}{Vectorial methods}  \\    
       &  &  &  &  &  &  &  &  &  &       &  &  &  &  &  &  &  &  &  &        &  &  &  &  &  &  &  &  &  &    \\
    \rotatebox{90}{90.00} &  \rotatebox{90}{96.00} &  \rotatebox{90}{94.00}&  \rotatebox{90}{96.00}&  \rotatebox{90}{\bf{98.60} }&   \rotatebox{90}{49.7 }&  \rotatebox{90}{80.3 }&  \rotatebox{90}{86.9 }&  \rotatebox{90}{83.9 }&  \rotatebox{90}{80.35 }&  \rotatebox{90}{90.41}&   \rotatebox{90}{93.3 }&  \rotatebox{90}{\textbf{99.02} }&  \rotatebox{90}{90.5}&   \rotatebox{90}{91.51}&  \rotatebox{90}{94.9}&  \rotatebox{90}{97.2}&  \rotatebox{90}{95.7}&  \rotatebox{90}{93.7 }&  \rotatebox{90}{59.39 }&  \rotatebox{90}{61.11}&  \rotatebox{90}{62.81}&  \rotatebox{90}{60.54}&  \rotatebox{90}{\bf{63.27} }&  \rotatebox{90}{68.58  }&  \rotatebox{90}{64.38 }&  \rotatebox{90}{70.37 }&  \rotatebox{90}{68.05 }&  \rotatebox{90}{77.34 }&  \rotatebox{90}{79.10 }&    \rotatebox{90}{91.12}&  \rotatebox{90}{93.20}&  \rotatebox{90}{95.60}& \rotatebox{90}{95.92} & \rotatebox{90}{96.41}&  \rotatebox{90}{96.60} & \rotatebox{90}{\bf{97.94}}  \\
   &  &  &  &  &  &  &  &  &  &       &  &  &  &  &  &  &  &  &  &        &  &  &  &  &  &  &  &  &  &    \\
   \rotatebox{90}{ GCNConv \cite{kipf17}} & \rotatebox{90}{ArmaConv \cite{ARMACONV19}} & \rotatebox{90}{ SGCConv \cite{SGCCONV19}} & \rotatebox{90}{ ChebyNet \cite{Bresson16}} & \rotatebox{90}{Our best MLGCN setting}  & \rotatebox{90}{  Raw coordinates  \cite{SBU12}} & \rotatebox{90}{Joint features \cite{SBU12}} & \rotatebox{90}{Interact Pose \cite{InteractPose}} & \rotatebox{90}{CHARM \cite{CHARM15}} & \rotatebox{90}{ HBRNN-L \cite{HBRNNL15}} & \rotatebox{90}{Co-occurence LSTM \cite{CoOccurence16}} & \rotatebox{90}{ ST-LSTM \cite{STLSTM16}} & \rotatebox{90}{ Joint line distance\cite{GeoFeat17}} & \rotatebox{90}{ Topological pose ordering\cite{velocity2}} & \rotatebox{90}{ STA-LSTM \cite{STALSTM}} & \rotatebox{90}{ GCA-LSTM \cite{GCALSTM}} & \rotatebox{90}{ VA-LSTM  \cite{VALSTM}} & \rotatebox{90}{DeepGRU  \cite{DeepGRU}} & \rotatebox{90}{ Riemannian manifold traj \cite{RiemannianManifoldTraject}} &  \rotatebox{90}{ GCNConv \cite{kipf17}} & \rotatebox{90}{ArmaConv \cite{ARMACONV19}} & \rotatebox{90}{ SGCConv \cite{SGCCONV19}} & \rotatebox{90}{ ChebyNet \cite{Bresson16}} & \rotatebox{90}{Our best MLGCN setting} & \rotatebox{90}{ Temporal pyramid \cite{TP19}} & \rotatebox{90}{Potion \cite{pose}}  &
    \rotatebox{90}{Temporal pyramid  \cite{TP19} + our best MLGCN setting}  &
    \rotatebox{90}{Potion \cite{pose} + our best MLGCN setting}  &
    \rotatebox{90}{Temporal pyramid  \cite{TP19} + Potion \cite{pose}}  & \rotatebox{90}{Temporal pyramid  \cite{TP19} + Potion \cite{pose}+ our best MLGCN}  &
    \rotatebox{90}{  2D two stream \cite{twostream14}}  &\rotatebox{90}{  2D two stream \cite{twostream14}+Our best MLGCN setting}  
    &\rotatebox{90}{ 3D appearance \cite{kin3d}} &
    \rotatebox{90}{ 3D appearance \cite{kin3d}+Our best MLGCN setting } & \rotatebox{90}{ 3D motion \cite{kin3d}} &\rotatebox{90}{ 3D motion \cite{kin3d}+ our best MLGCN setting} &\rotatebox{90}{    3D two stream \cite{kin3d}} &  
 \end{tabular}}\caption{Comparison against state of the art methods.}   \label{compare}            
\end{table}
\section{Conclusion} 
We introduced in this paper a novel Multi-Laplacian Graph Convolutional Network (MLGCN) for action recognition. The strength of our method resides in its effectiveness in learning combined laplacian convolutional operators each one dedicated to a particular setting of the manifold enclosing the input graph data. Our method also considers a novel pooling process which first expands nodes with their context prior to achieve global average pooling. Extensive experiments conducted on the SBU as well as the challenging UCF-101 datasets, show the outperformance and also the complementary aspect of our MLGCN w.r.t different baselines and the related work including graph-based methods.\\
As a future work, we are currently studying other laplacian combination strategies and also the extension of our graph convolutional networks to other tasks and benchmarks.

\newpage

\end{document}